\newtheorem{theorem}{Theorem}
\newtheorem{lemma}[theorem]{Lemma}
\newcommand{\BibTeX}{B\kern-.05em{\sc i\kern-.025em b}\kern-.08em\TeX}
\begin{document}


\begin{frontmatter}


\paperid{4522} 


\title{Instruction-Based Molecular Graph Generation with Unified Text-Graph Diffusion Model}


\author[A]{\fnms{Yuran}~\snm{Xiang}\thanks{Email: xiangyuranyp@stu.pku.edu.cn}}
\author[A]{\fnms{Haiteng}~\snm{Zhao}}
\author[B]{\fnms{Chang}~\snm{Ma}} 
\author[A]{\fnms{Zhi-Hong}~\snm{Deng}\thanks{Email: zhdeng@pku.edu.cn, Corresponding author}}

\address[A]{State Key Laboratory of General Artificial Intelligence, School of Intelligence Science and Technology, Peking University}
\address[B]{The University of Hong Kong}


\begin{abstract}
Recent advancements in computational chemistry have increasingly emphasized generating and editing molecules from textual instructions. However, integrating graph generation with instruction understanding remains challenging, as most existing approaches either rely on molecular sequences in text modality with limited structural information, or struggle with multimodal alignment in graph diffusion methods. To address these limitations, we propose $\textbf{UTGDiff}$ (\textbf{Unified Text-Graph Diffusion Model}), a novel framework that utilizes pre-trained language models for discrete graph diffusion, enabling the generation of molecular graphs from instructions. UTGDiff introduces a unified text-graph transformer as a denoising network, adapted with minimal modifications from language models to process graph data via attention bias. Experimental results show that UTGDiff consistently outperforms both sequence-based and conditional graph-diffusion baselines on instruction-based molecule generation and editing tasks with fewer parameters, covering instructions specifying molecular structures or properties.

\end{abstract}

\end{frontmatter}


\section{Introduction}

Molecules possess structures and properties that determine the characteristics of substances. Research into novel molecules is crucial for fields such as chemistry, biology, and drug discovery \citep{schneider2005computer}. A central challenge in drug discovery is designing molecules that are stable, exhibit desired functionalities, and interact with specific biological targets \citep{hartenfeller2011novo}. Traditionally, this process has been resource-intensive and time-consuming \citep{dickson2009cost}. However, recent advances in deep learning have enabled cost-effective methods, revolutionizing drug design and accelerating the generation of drug-like molecules \citep{nag2022deep,askr2023deep}.

Modern drug discovery requires exploring vast chemical spaces under specific constraints to identify molecules that fulfill complex therapeutic objectives. Traditional approaches based on a few scalar properties (e.g., solubility) often fall short in capturing the diversity and complexity of real-world design targets, such as specific structural motifs or functional groups. In contrast, natural language offers a flexible and expressive medium for conveying high-level design intents that encompass functional and structural goals. Consequently, there is increasing interest in the tasks of \textbf{instruction-based molecule generation and editing} \citep{edwards2022translation,fang2023mol,liu2024graph}, which aims to generate novel candidates from natural language instructions or edit existing ones to meet desired instructions, as illustrated in Figure~\ref{fig:drug}.

\begin{figure}[htbp]
  \centering
  \includegraphics[width = 0.87\columnwidth]{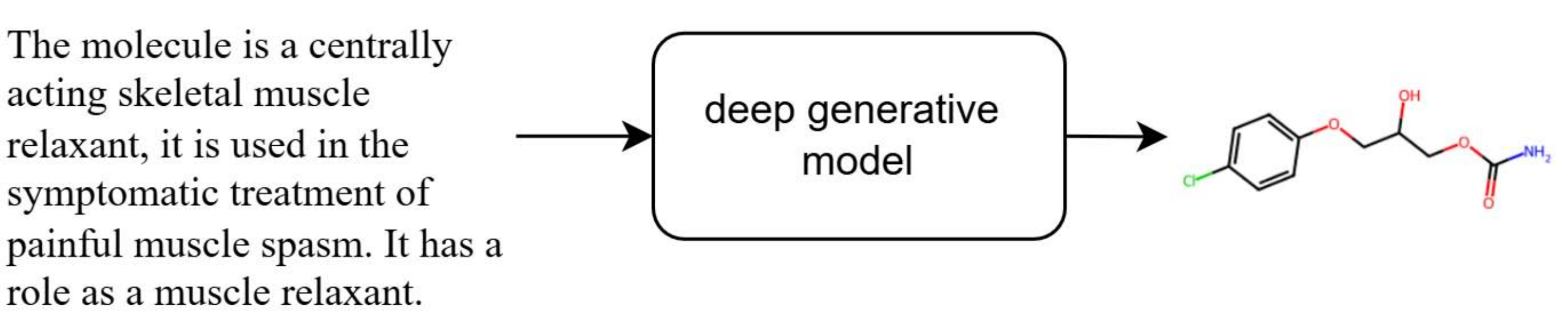}
  \caption{A generative model works like a drug design specialist, with instruction input and molecule output.}
  \label{fig:drug}
\end{figure}

To generate molecules from natural language instructions, prior studies directly use language models on string-based representations like SMILES or SELFIES  \citep{weininger1988smiles,krenn2020self}.  This method has been explored by fine-tuning small-scale language models like T5 \citep{edwards2022translation} or adapting large language models \citep{ye2025drugassist}. However, such single-modal methods face inherent limitations. Notably, molecular strings lack explicit structural information, such as neighborhood relationships based on edges between atoms \citep{jiang2022multigran,wu2023molecular}. Moreover, representing molecules purely in language can lead to deviation from realistic molecular distributions, as minor string reassembly may produce chemically implausible structures \citep{druchok2021toward}. These issues highlight the need for a multimodal framework for language and graph models, combining robust structural representation with strong instruction-following abilities.

To better represent graph structures in multimodal frameworks, several recent studies have explored powerful graph generation techniques. Among them, graph diffusion methods have shown strong performance and scalability on larger molecules \citep{niu2020permutation,jo2022score}. Recent efforts have introduced guiding the diffusion process using several scalar conditions via predictor-guided \citep{vignac2023digress,weiss2023guided} or predictor-free approaches \citep{liu2024graph,zhu20243m}. However, when conditioning on natural language, existing methods face notable limitations. While sentences embeddings can replace scalar conditions, they are typically encoded as fixed vectors and injected into node and edge embeddings through simple operations like addition or linear projection. This prevents graph elements from attending to relevant textual tokens, limiting semantic selectivity and hindering fine-grained injection from text, as nodes and edges cannot determine which instruction components are relevant to themselves. Moreover, language representations remain static throughout denoising, preventing interaction and evolving with graph, thereby limiting the model’s ability to refine textual representations and provide guidance tailored to incomplete structures in denoising. These limitations highlight the need for finer-grained and bidirectional integration between language and graph.

\begin{figure*}[htbp]
  \centering
  \includegraphics[width=0.88\textwidth]{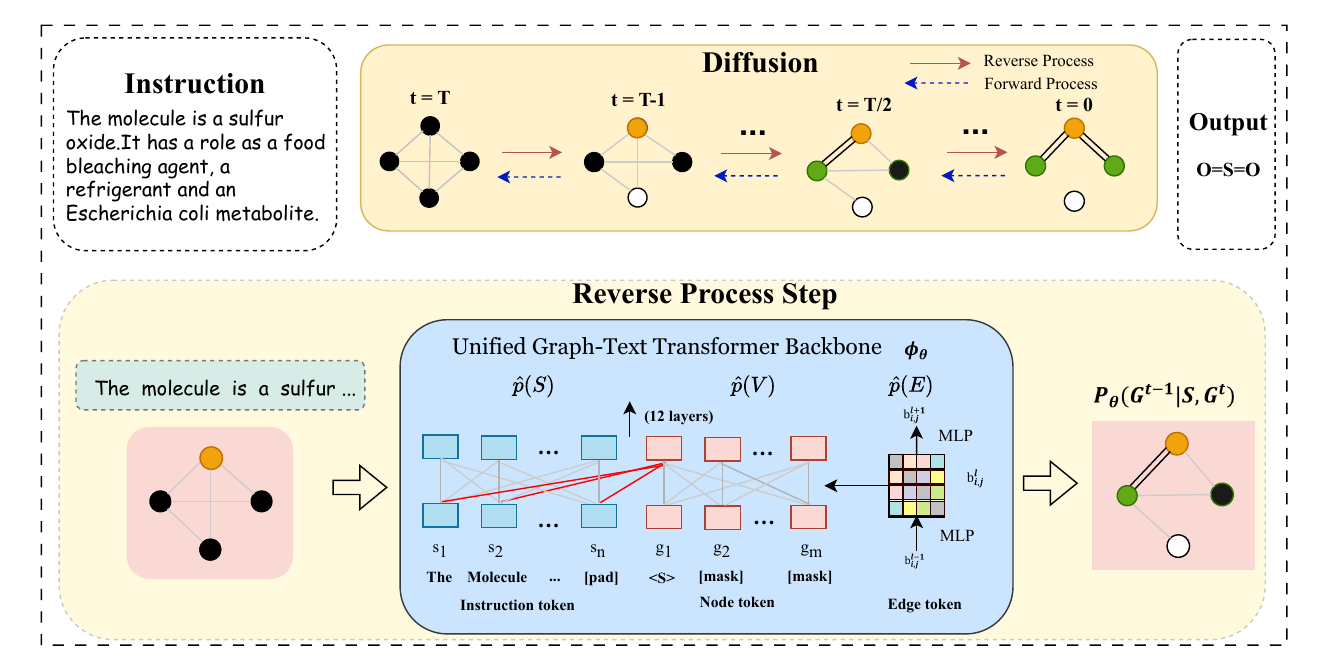}
  \caption{Overview of the UTGDiff framework. It incorporates attention bias into the vanilla transformer, forming a unified text-graph transformer that serves as a denoising network for discrete graph diffusion to generate molecular graphs from instructions. Noise decays some nodes and edges into [MASK] during training (forward process), with the reverse process aiming to recover original graphs as the training objective. Sampling starts with a masked graph  and iterates $T$ times to reduce noise, where each time it process together with fixed given instruction.} 
  \label{fig:example-diff}
\end{figure*}

To this end, we propose the \emph{Unified Text-Graph Diffusion Model (UTGDiff)}, a novel conditional graph diffusion framework that unifies language and graph modalities within a single language transformer. By embedding both modalities into a unified transformer, UTGDiff enables more effective token-level text injection, as graph tokens can selectively attend to text tokens via attention. In addition, the instruction representation is continuously updated through attention with graph tokens, allowing it to evolve with molecular structures and contribute back to graph generation. UTGDiff extends masked language models to simulate graph diffusion through three key novel modifications, treating [MASK] tokens as noise in graph, extending the vocabulary with atom-specific tokens, and introducing a loss function to preserve text comprehension for alignment. We further incorporate attention bias into the transformer to model edges. 

We conduct extensive experiments on the pre-trained UTGDiff model across instruction-based molecule generation and editing tasks, covering conditions on both molecular structures and properties. Our results demonstrate that, despite its small parameter size, UTGDiff consistently outperforms both language model and graph diffusion baselines under comparable pretraining resources available to us, achieving higher similarity while maintaining a relatively high chemical validity. In summary, our main contributions are:\begin{itemize}
\item We propose UTGDiff, a novel framework that unifies language and graph modalities within a single language transformer, with graph selectively attending to text and text being refined by graph.

\item We extend language models to simulate graph diffusion by treating [MASK] as graph noise, extending atom-specific tokens, modifying the loss function, and incorporating attention bias for edges.

\item We conduct extensive experiments showing that UTGDiff outperforms language model and graph diffusion baselines in key areas of drug design, achieving higher similarity and relatively high validity under comparable pretraining resources available to us.
\end{itemize}

\section{Related Works}

\paragraph{Molecule Generation}
\label{sec:2.1}

Early works on molecule generation introduced sequence-based models for SMILES \cite{dai2018syntax,gomez2018automatic}. Due to the lack of structural information, recent studies explored graph-based methods, including auto-regressive \cite{grisoni2020bidirectional, you2018graphrnn}, VAEs \cite{simonovsky2018graphvae,jin2018junction}, GANs \cite{de2018molgan}, and Normalizing Flows \cite{ma2021gf,zang2020moflow}. Graph Diffusion is a relatively new approach with superior performance  \citep{jo2022score,vignac2023digress}, which serves as our foundational framework.

\paragraph{Conditional Molecule Generation}

Conditional molecule generation aims to generate a molecule that satisfies a specific property. Early attempts are based on single scalar property such as dipole moment $\mu$ and p-logP scores \cite{li2018multi,huang2023conditional}. More complex conditions involve multi-condition \cite{bilodeau2022generative,liu2024graph} and textual instructions.  The latter are addressed by sequence-based generation \citep{edwards2022translation,christofidellis2023unifying,li2024empowering,pei2023biot5} and multimodal generation employing a graph decoder \citep{su2022molecular,liu2024git}.

\paragraph{Cross-Modal Molecule Models}
\label{sec:2.2}

In addition to the above instruction-based cross-modal graph generation models, some studies represent molecules and text in cross-modal models for property prediction or text generation \citep{liu2023molca,zhao2024gimlet,liu2023multi}, employing contrastive alignment, cross-modal projector, or unified text-graph backbone.

\paragraph{Discrete Diffusion}
\label{sec:2.3}

Previous Gaussian noise in continuous diffusion models has been shown to be insufficient for discrete data structure. As a solution, researchers introduced the discrete diffusion method, initially discussed in text diffusion \citep{austin2021structured, he2023diffusionbert} and later adapted for graph data \citep{vignac2023digress,kong2023autoregressive}.

\section{Method}

\subsection{Problem Formulation}
\label{3.1}
The objective of our task is to generate a molecular graph $G = (V, E)$ from a given textual description $S$ specifying molecular functions and properties. The graph consists of $m$ nodes $V \in \mathbb{R}^{m \times a}$ and edges $E \in \mathbb{R}^{m \times m \times b}$ representing node and edge attributes, respectively. Each attribute $v_i \in \mathbb{R}^{a}$ and $e_{i,j} \in \mathbb{R}^{b}$ is a one-hot vector corresponding to the categories of the respective nodes and edges. The instruction text is defined as $S = [s_1, \dots, s_n]$, where $s_i$ denotes the $i$-th token and $n$ is its length. The goal is to perform conditional graph generation, i.e., sampling $V, E \sim P (V, E \mid S)$.

\subsection{Overview of Diffusion Framework}
Building upon the above problem setting, We propose a text-conditioned diffusion framework for molecular graph generation, illustrated in Figure \ref{fig:example-diff}. As a generative process, the diffusion model iteratively refines random noise into structured data through a sequence of transformations. It involves a forward process which progressively corrupts the data by adding noise according to a noise model, and a reverse process which progressively denoises the data to reconstruct the original molecular structure using a denoising network. The core component of our framework is the unified text-graph denoising transformer, started from a language-pretrained transformer for text comprehension, and enhanced with a [MASK] noise module and graph-specific modeling techniques. 

\subsection{Forward Process and Noise Model}

We adopt the forward diffusion process following \citet{ho2020denoising}, where the clean graph $G^0$ is progressively corrupted by a noise model $q$, producing a sequence of increasingly noisy graphs $(G^1, \dots, G^T)$. It is defined as $q (G^{1:T} \mid G^0) = \prod^T_{t=1} q (G^t \mid G^{t-1})$.

Following the discrete diffusion setting in \citep{vignac2023digress}, a sequence of transition matrices $ (Q^1, \dots, Q^T)$ is defined independently for nodes $V$ and edges $E$ as noise model, denoted as $Q_V^t$ and $Q_E^t$. Each step in forward process can be expressed as: 

\begin{equation}
q (G^t\mid G^{t-1}) = (V^{t-1}Q_V^t, E^{t-1}Q_E^t)
\end{equation}
Under the cumulative transition matrices formulation $\bar{Q}_V^t=Q_{V}^{1}\dots Q_{V}^{T}$ and $\bar{Q}_E^t=Q_{E}^{1}\dots Q_{E}^{T}$, the forward process can also be expressed as $q (G^{t}\mid G^0)= (V\bar Q_{V}^{t},E\bar Q_{E}^{t})$. With specific transition matrices, the 1-step and t-step marginals are in the closed form, and the posterior at time step $t-1$ admits a closed-form expression:
\begin{equation}
\begin{split}
q(v_i^{t-1}|v_i,v_i^t) = (v_i^{t-1} {Q_V^t}^\top \circ v_i \bar{Q}_V^{t-1})/(v_i \bar{Q}^t_V {v_i^{t}}^\top) \\
q(e_{ij}^{t-1}|e_{ij},e_{ij}^t) = (e_{ij}^{t-1} {Q_V^t}^\top \circ e_{ij} \bar{Q}_V^{t-1})/(e_{ij} \bar{Q}^t_V {e_{ij}^{t}}^\top)
\end{split}
\end{equation} 

\paragraph{Graph Diffusion with [MASK] absorbing states}

Then we should focus on the formulation of noise model. Traditional methods consider the noise model using the marginal distributions over atom and bond types \citep{vignac2023digress}. While they claimed this makes training easier theoretically and experimentally, such assumptions are tailored to single-modal graph denoising model trained from scratch. In contrast, our goal is to integrate language understanding and graph denoising within a unified language-pretrained transformer, rendering this noise modeling assumption incompatible with our architecture.

Therefore, we employ [MASK] absorbing states originally proposed in text diffusion \citep{austin2021structured} into our graph diffusion framework. Under this formulation, each element $v_i^t$ and $e_{i,j}^t$ independently decays into a [MASK] token as an absorbing state according to a noise schedule. Following the schedule $\beta (t) = (T-t+1)^{-1}$, the transition matrix $Q^t$ is formally expressed as follows for both nodes and edges simultaneously, where $z$ denotes the [MASK] index, and $x,y$ denote categories at consecutive timesteps:
\begin{align}
    [Q^{t}]_{x,y}&=\left\{\begin{array}{l l l}{{1}}&{{\text{if }}}{{x=y=z}}\\ {{1-\beta (t)}}&{{\text{if }}}{{x=y\not=z}}\\ {{\beta (t)}}&{{\text{if }}}{{y=z,x\not=z}}\\{{0}}&{{\text{otherwise}}}\end{array}\right.
\end{align} 

The feasibility of this method is supported by the interpretation of BERT as a one-step discrete diffusion model \cite{austin2021structured}, where the network is trained to predict the original tokens from [MASK] inputs. As we adopt a language-pretrained transformer familiar with this objective, the formulation aligns with its pretraining goal, thereby facilitating faster convergence of graph diffusion during fine-tuning. It also promotes better integration of text and graph modalities, and helps retain language understanding through direct parameter reuse. Empirical results further validate its effectiveness.

\subsection{Reverse Process and Denoising Network}
\label{sec:rev}

With the initial noisy graph $G^T$, the reverse process generates $G^0$ iteratively in reversed steps $t = T, T-1, ... , 0$, formally represented as $p_\theta (G^{0:T-1}\mid G^T, S) = \prod^T_{t=1} p_\theta (G^{t-1} \mid G^t, S)$. It relies on a denoising neural network $\phi_\theta$ parameterized by $\theta$, which is trained to invert the forward process by predicting $G^{0}$ from $G^t$ conditioned on the instruction $S$ for each $t$, as the product over nodes and edges: $p_\theta (G^{0}\mid G^t, S) = \prod_{v\in V} p_\theta (v \mid G^t, S) \prod_{e\in E} p_\theta (e \mid G^t, S)$. 

Then, to calculate the reverse distribution $p_\theta (G^{t-1}\mid G^t, S)$, we adopt $x_0$-parameterization with noise back to the clean graph prediction, represented as $p_\theta (G^{t-1} \mid G^{t}, S) = q (G^{t-1}\mid G^t,\phi_\theta (G^t, S))$, and marginalized over predictions of node and edge types $\mathcal{V'}, \mathcal{E'}$:

\begin{equation}
\begin{split}
   &\!p_{\theta} (v^{t-1}_i\!\mid\!G^{t}\!,\! S)\!=\!\sum_{v\in\mathcal{V'}}q (v^{t-1}_i \!\mid\! v_{i}\!=\!v,\!G^{t})\;p_{\theta}(v_i=v|G^t,S) \\
   &\!p_{\theta} (e^{t-1}_{i,j}\!\mid\!G^{t}\!, \!S)\!=\!\sum_{e\in\mathcal{E'}}q (e^{t-1}_{i,j}\!\mid\!e_{i,j}\!=\!e,\!G^t)\;p_{\theta}(e_{i,j}=e|G^t,S)
\end{split} 
\end{equation}

To enable graph tokens to selectively attend to text tokens, and allow textual representations to be refined by graph structure, we propose the \emph{Unified Text-Graph Transformer} as our denoising network, which unifies two modalities within a single language-pretrained transformer. Below, we describe two key components of the model.

\paragraph{Molecule Tokenization via Shared Vocabulary}

To adapt the language transformer for graph data, we first address the representation of molecular nodes. The key novel design is to treat molecular graph nodes as discrete tokens drawn from a shared vocabulary space, enabling seamless integration with transformer-based language models. 

Specifically, we assign a unique textual token (e.g., [C], [O-]) to each atomic category and embed them in the same token space. The initial embeddings are defined as $h^0 = \text{Emb}(S+V) = [h_1, \dots, h_n, h_{n+1}, \dots, h_{n+m}] \in \mathbb{R}^{(n+m) \times d_h}$, corresponding to $n$ instruction tokens and $m$ graph nodes. The graph node embeddings $h_{n+1}, \dots, h_{n+m}$ are derived from the atom tokens (e.g., [[S], [O], [O]] for sulfur oxide).

This tokenization strategy allows graph nodes to be processed identically to language tokens within a unified transformer, enabling attention-based cross-modal integration and consistent parameter sharing. It also prevents compositional ambiguities, such as mistakenly interpreting cobalt (Co) as a combination of carbon and oxygen.

\setlength{\tabcolsep}{2pt} 

\begin{table*}

\centering
\begin{tabular}{c c c|c c c c c c}
\toprule
Type & Model & \#params & MACCS FTS $\uparrow$ & RDK FTS $\uparrow$ & Morgan FTS $\uparrow$ & FCD $\downarrow$ & Exact $\uparrow$ & Valid $\uparrow$ \\
\midrule
\multirow{3}*{\shortstack{Text \\ Auto-regressive\\ (w.o. pretrain)}} 
& T5-base & 248M & 0.731 & 0.605 & 0.545 & 2.48 & 0.069 & 0.660 \\
& T5-large & 783M & 0.823 & 0.731 & 0.670 & 1.22 & 0.279 & 0.902 \\
& BioT5-base (reproduce) & 252M & 0.821 & 0.708 & 0.633 & 1.67 & 0.071 & \textbf{1.000} \\
\midrule
\multirow{4}*{\shortstack{Text \\ Auto-regressive \\ (pretrain)}} 
& MolT5-base & 248M & 0.721 & 0.588 & 0.529 & 2.18 & 0.081 & 0.772 \\
& MolT5-large & 783M & 0.834 & 0.746 & 0.684 & 1.20 & 0.311 & 0.905 \\
& MolXPT & 350M & 0.859 & 0.757 & 0.667 & \textbf{0.45} & 0.215 & 0.983 \\
& BioT5-base (reproduce) & 252M & 0.843 & 0.745 & 0.676 & 1.41 &  0.097 & \textbf{1.000} \\
\midrule
\multirow{2}*{Text diffusion} & tgm-dlm & 125M & 0.854 & 0.739 & 0.688 & 0.77 & 0.242 & 0.871 \\
& tgm-dlm w/o corr & 125M & 0.874 & 0.771 & 0.722 & 0.89 & 0.242 & 0.789 \\
\midrule
\multirow{4}*{LLM} & Chatgpt3.5 (0-shot) & - & 0.703 & 0.568 & 0.517 & 2.49 & 0.079 & 0.721 \\
& Chatgpt3.5 (10-shot) & - & 0.847 & 0.708 & 0.624 & 0.57 & 0.139 & 0.887 \\
&gpt4 (0 shot) & - &	0.787	&0.577&	0.492&	1.33	&0.055	&0.883\\
&gpt4 (10 shot)	& - &0.872&	0.736&	0.659&	0.66&	0.092&	0.891\\
&LlaSMol-Mistral & 7B & 0.853&0.726&0.650&0.70&0.132&	0.935\\
\midrule
\multirow{4}*{Graph-based} & Momu-S & 113M & 0.244	& 0.103 & 0.047 & 22.21 & 0.000 & 1.000 \\
& Digress (similarity guidance) & 289M & 0.577 & 0.389 & 0.288 & 28.82 & 0.014 & 0.854 \\
& Graph-DiT & 162M & 0.374 & 0.269 & 0.159 & 18.58 & 0.000 & 0.909 \\
& 3M-Diffusion & 162M & 0.548 & 0.370 & 0.273 & 4.42 & 0.005 & 1.000 \\
\midrule
 & UTGDiff (w.o. pretrain) &  125M & 0.867 & 0.763 & 0.695 & 0.92 & 0.227 & 0.856 \\
& UTGDiff (pretrain) &  125M & \textbf{0.885} &\textbf{ 0.795} & \textbf{0.724} & 0.86 & \textbf{0.374} & 0.893 \\
\bottomrule

\end{tabular}
\caption{Results of the instruction-based molecule generation on ChEBI-20 for both with/without pretraining setting.}
\label{table2}
\end{table*}

\paragraph{Attention Bias}

Next, we address the representation of edges. In the attention layer with parameters for values, keys, queries, and outputs, i.e., $W_V, W_Q, W_K \in \mathbb{R}^{d_h \times d_k}, W_O \in \mathbb{R}^{d_k \times d_h}$, we introduce an additional bias term to incorporate edge information. The attention score between the $i$-th and $j$-th graph tokens is now formalized as:
\begin{equation}
\begin{split}
   \hat{A}_{i,j}^l = \frac{1}{\sqrt{d_{k}}}\left (h_{i}^lW_{Q}\right)\left (h_{j}^lW_{K}\right)^\top+b_{i,j}^l, \\ 
A^l = \mathrm{softmax} (\hat{A}^l), \ \text{Attn} (H)= A^l H^l W_V W_O
\end{split}
\end{equation} 
Here, $b_{i,j}$ denotes the bias added to the attention scores between graph tokens to incorporate structural information \citep{ying2021transformers,zhao2023more}. To ensure the expressive power in distinguishing neighbors as one-layer GNNs and the symmetry between edge representations, we therefore represent the initial embedding from the edge category as $b_{i,j}^0$ and the update of attention bias as follows: 
\begin{align}
b_{i,j}^l =
\begin{cases}
b_{i,j}^0, & \text{if } l = 0 \\
\text{FFN}((A_{i,j}^{l-1} + {A_{j,i}^{l-1}})/2) , & \text{otherwise} \\
\end{cases}
\end{align} 
The FFN module applied to attention scores follows the structure in standard transformer, including a residual connection and LayerNorm. Our design distinguishes neighbors using edge-specific initial biases $b_{i,j}^0$, and enforces symmetry by symmetrizing the attention scores. Beyond structural constraints, our model minimally modifies standard transformer, enabling joint text-graph processing without additional GNNs and allowing full reuse of pre-trained language model weights. Also, layer-wise text-graph interactions enable graph tokens to selectively attend to text tokens and text representations to be refined by graph, supporting fine-grained interactions.

\paragraph{Output} After obtaining the final hidden states $h^L$ for tokens and $b^L$ for edges through stacked attention layers, we map them back to the vocabulary space for the probability of each category through two separate masked language model heads at the end of the transformer. This yields the node logits $\log p_\theta(v_i \mid G^t, S)$ and edge logits $\log p_\theta(e_{i,j} \mid G^t, S)$ for the clean graph distribution. Assembling them into matrices $\hat{p}(V) \in \mathbb{R}^{m \times a}$ and $\hat{p}(E) \in \mathbb{R}^{m \times m \times b}$, the unified denoising network is summarized as:
\begin{equation}
\phi_\theta (G^{t},S) = \hat{p} (G) = [\hat{p} (V), \hat{p} (E)]
\end{equation}

\paragraph{Sampling} 

Empirically, positional embeddings can improve performance in the unified transformer by atom locations, while inherently break permutation equivariance. Therefore, inspired by permutation-based graph encoding methods \cite{murphy2019relational,nikolentzos2020random}, we adopt a sampling strategy that preserves equivariance at inference: positional embeddings are applied to a fixed node order during training, while random permutations of position indices are introduced during sampling.

Therefore, for any permutation $\pi$, the model generates a graph with node $V$, adjacency matrix $A$, and position indices $i$ with equal probability, satisfying $\mathbb{P}_i (V, A) = \mathbb{P}_{\pi^T i} (\pi^T V, \pi^T A \pi)$, thereby ensuring equal probability over the molecule's permutation set during sampling.  We prove the equivariance in Appendix B in the supplementary material \cite{xiang2024instruction}.

\subsection{Training Objective}
To train the denoising network, the original method \cite{vignac2023digress} optimizes the cross-entropy loss (denoted as $\mathrm{CE}$) between the predicted probabilities $\hat{p}(G)$ and the clean graph $G$ in the context of single-modal graph diffusion. However, in our multimodal setting, such graph-only supervision is insufficient. Since the model is required to interpret textual instructions while generating molecular structures, an additional loss term related to the text modality is necessary to preserve its language understanding capabilities.

Therefore, we introduce a novel additional textual loss by applying the cross-entropy objective between the predicted logits $\hat{p}(S)$ and the ground-truth instruction tokens. Through the denoising transformer, the logits $\hat{p}(S)$ can obtained alongside $\hat{p}(V)$ and $\hat{p}(E)$ via the masked language model head, formulated as $\phi_\theta (G^{t}, S) = [\hat{p} (S), \hat{p} (V), \hat{p} (E)]$. Then we define the final training loss $\mathcal{L}$ as:
\begin{equation}
\begin{split}
\mathcal{L} &= l (\hat{p} (V),V) + l (\hat{p} (E),E) + l (\hat{p} (S),S) \\ \!&=\!\sum_{1 \le i \le m}\!\mathrm{CE}(v_{i},\!\hat{p}_i (V))  \!+\!\sum_{1 \le i,j \le m}\!\mathrm{CE} (e_{i,j},\!\hat{p}_{i,j} (E)) \\ &+ \!\sum_{1 \le i \le n}\!\mathrm{CE}(S_{i},\!\hat{p}_i (S))
\end{split}
\end{equation}

\begin{table*}[t]
\centering
\begin{tabular}{lcccccccccc}
\toprule
{\multirow{2}{*}{Model}}  & \multicolumn{1}{c}{ Validity $\uparrow$} & \multicolumn{4}{c}{ Distribution Learning} & \multicolumn{5}{c}{Condition Control} \\
&  (\small w/o rule checking) & Coverage $\uparrow$ &Diversity $\uparrow$ & Similarity $\uparrow$ & Distance $\downarrow$ &   Synth. $\downarrow$  &  O$_2$Perm $\downarrow$ &N$_2$Perm $\downarrow$ &  CO$_2$Perm $\downarrow$ & Avg. MAE $\downarrow$ \\
\midrule
JTVAE-BO & 1.0000 (N.A.) & 10/11 & 0.7366 & 0.7294 & 23.5990 & {\textbf{1.0714}} & 1.0781 & 1.2352 & 1.0978 & 1.1206 \\
\midrule
DiGress & 0.9913 (0.2362) & 11/11 & 0.9099 & 0.2724 & 22.7237 & 2.9842 & 1.7163 & 2.0630 & 1.6738 & 2.1093 \\
GDSS & 0.9205 (0.9076) & 9/11 & 0.7510 & 0.0000 & 34.2627 & 1.3701 & 1.0271 & 1.0820 & 1.0683 & 1.1369 \\
Graph-DiT-LCC & 0.9753 (0.8437) & 11/11 & 0.8875 & 0.9560 & 7.0949 & 1.3099 & 0.8001 & 0.9562 & 0.8125 & 0.9697 \\
Graph-DiT & 0.8245  (0.8437) & 11/11 & 0.8712 & { {0.9600}} & { {6.6443}} & 1.2973 & { {0.7440}} & {{0.8857}} & {{0.7550}} & {{0.9205}} \\
\midrule
UTGDiff (Ours) & 0.9596  (0.9085) & 11/11 & 0.8532 & { \textbf{0.9637}} & { \textbf{6.0923}} & \textbf{1.0208} & { \textbf{0.6462}} & {\textbf{0.7826}} & {\textbf{0.6459}} & {\textbf{0.7739}} \\
\bottomrule
\end{tabular}
\caption{Multi-Conditional Generation of 10K Polymers: Results on the synthetic score (Synth.) and three numerical properties (gas permeability for O$_2$, N$_2$, CO$_2$). MAE is calculated between the input conditions and the properties of the generated polymers using Oracles. Best results are {\textbf{highlighted}}.}
\label{tab:main1}
\end{table*}

This loss helps preserves the model’s language comprehension by reinforcing masked language modeling behavior. When a text token is provided as input, the model is explicitly trained to predict the same token, ensuring token-level consistency. This aligns with the core objective of MLM: reconstructing masked tokens from context demonstrates semantic comprehension, while preserving the identities of unmasked tokens stabilizes the language representations and supports the retention of language understanding during training. 

\subsection{Pretraining Method}

We have mentioned that BERT can be interpreted as a one-step [MASK] absorbing diffusion model which discussed in~\citep{austin2021structured}. This shared training objective enables initialization from pre-trained language models~\citep{he2023diffusionbert}. Accordingly, we initialize our model from the pre-trained RoBERTa~\citep{liu2019roberta} to preserve language comprehension.

However, RoBERTa initialization is insufficient for masked prediction on novel graph tokens. To address this, we further pretrain the model on masked graph data while preserving its text modeling ability. We collect paired and single-modal data from textual and graph modalities, randomly mask 15\% of tokens and edge indices, and train the model with masked language modeling. The effectiveness of this method on graph data has been validated by prior work~\citep{hou2022graphmae}.

\begin{table}[t]

\centering
\begin{tabular}{c |c c}
\toprule
Model & names \\
\midrule

molT5 & C4, ZINC \\

\multirow{2}*{BioT5} & C4, Pubmed, ZINC (w./w.o. IUPAC), \\ & pubchem324K, bioRxiv, NER Wrapped biotext \\

UTGDiff & Pubmed, ZINC, pubchem324K \\
\bottomrule

\end{tabular}
\caption{Review of pretraining datasets.}
\label{table0}
\end{table}

Specifically, for multimodal data, we use the PubChem-324K~\citep{liu2023molca}, containing 320K molecule-text pairs. For single-modal data, we employ molecular graphs from ZINC20~\citep{irwin2020zinc20} and biomedical text from PubMed abstracts~\citep{white2020pubmed}, totaling nearly 100 million entries. Since PubChem-324K is explicitly curated to avoid overlap in generation and others are single-modal data, there're no risk of data leakage.

\section{Experiment}

\subsection{Instruction-based Molecule generation}
\label{sec:4.1}

First, we evaluate UTGDiff on molecule generation tasks, to demonstrate whether UTGDiff achieves high instruction fidelity and structural validity under structural descriptions. 

\paragraph{Dataset} We utilize the ChEBI-20 dataset~\citep{edwards2021text2mol} for fine-tuning and evaluation. It contains 33,010 molecule-instruction pairs, with 10\% allocated for validation and 10\% for testing.

\paragraph{Baselines} Baseline models include (1) Sequence-based models---T5, MolT5 \citep{edwards2022translation}  and BioT5 \citep{pei2023biot5}; (2) Large language models---ChatGPT-3.5, GPT-4 (2024-04-09), and LlaSMol-Mistral-7B fine-tuned on molecular data; (3) text diffusion model tgm-dlm \citep{gong2024text}; and (4) graph-based models---MoMu-S \cite{su2022molecular} (flow-based graph decoder), Digress\cite{vignac2023digress} (predictor-guided  diffusion), Graph-DiT \cite{liu2024graph} (predictor-free multi-condition diffusion), and 3M-Diffusion \cite{zhu20243m} (separate encoders/decoders).  BioT5 is retrained on our pretraining data for fair comparison (Table~\ref{table0}) due to unavailable original pretraining data.

\paragraph{Evaluation} We adopt 6 metrics to assess the similarity and quality of generated molecules: (1) Exact match  (Exact); (2) Molecular validity (Validity); (3) 3 fingerprint metrics for similarity in functional group level (FTS); (4) Fréchet ChemNet Distance for similarity in set level (FCD). The full description is in Appendix C in the supplementary material \cite{xiang2024instruction}.

\paragraph{Results} The results are summarized in Table~\ref{table2}.  Additional results with different random seeds and hyperparameter details are provided in Appendix C and D in the supplementary material \cite{xiang2024instruction}. Our main findings are as follows:

\textbf{UTGDiff outperforms all text-based autoregressive methods on similarity metrics with comparable pretraining data size.} 
It achieves substantial improvements in Exact match and FTS scores, surpassing text baselines by 6\% on average, highlighting the diffusion paradigm's strength in capturing complex molecular structures.

\textbf{UTGDiff significantly outperforms graph-based methods.} 
Previous graph diffusion approaches struggle with complex conditional generation due to information bottleneck and limited scalability. In contrast, UTGDiff enables finer-grained integration of language and graph modalities through unified model and layer-wise aggregration, leading to substantial improvements in instruction-following ability.

\textbf{Simple text diffusion methods fall short without deep text-graph integration.} 
UTGDiff improves the Exact match score over 15\% compared to text diffusion models, indicating that our gains are not solely attributable to the diffusion framework itself but to the deep structural integration of graphs within the language model.

\textbf{UTGDiff achieves better performance with significantly fewer parameters.}  Our model contains only 125M parameters, substantially smaller than other models but with better performance.

UTGDiff is marginally outperformed by BioT5+ and a few fine-tuned LLM~\citep{li2025large}, which can be attributed to their much larger pretraining corpora. Minor shortcomings in molecular validity stem from inevitable valence violations, while lower FCD can due to the metric's inability to distinguish cross-instruction mismatches, with further analysis in Appendix A in the supplementary material \cite{xiang2024instruction}.

\begin{figure}[htbp]
  \centering
  \includegraphics[width=\columnwidth]{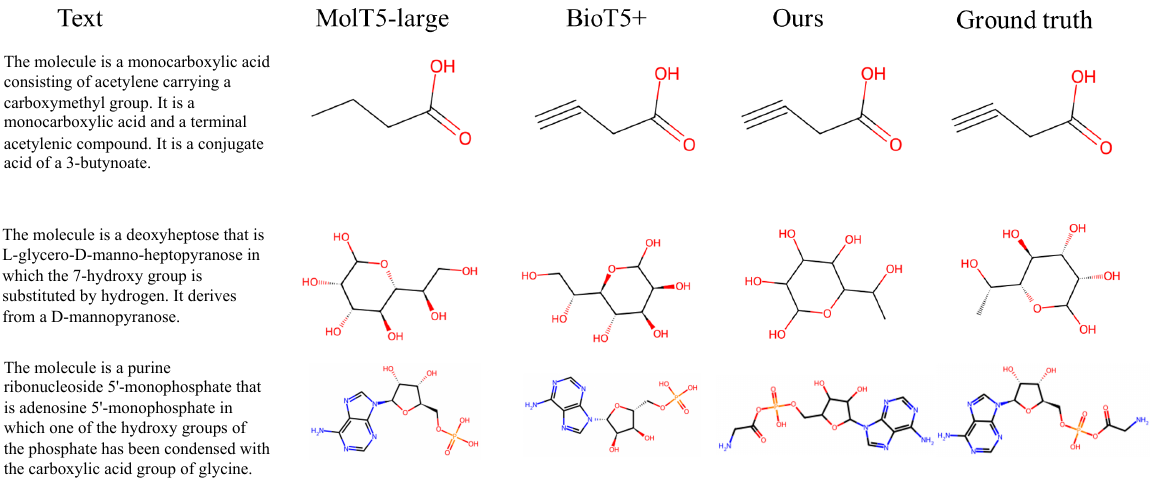}
  \caption{Some generation examples comparison on CHEBI-20.}
  \label{fig:example-example}
\end{figure}

\paragraph{Case Study}  Several examples are shown in Figure~\ref{fig:example-example}. UTGDiff successfully generates accurate molecules as baseline models (Example 1), and can produce correct structures where other models fail (Example 2). Furthermore, it can capture critical instruction details more effective, such as "carboxylic acid group of glycine", demonstrating its ability for contextually appropriate outputs (Example 3).

\begin{table}

\centering
\begin{tabular}{c |c c c c c}
\toprule
Length & 1-64 & 65-96 & 97-128 & 129-160	& $>$ 160  \\
\midrule
BioT5	& 0.810	& 0.866	& 0.859	& 0.817	& 0.788 \\
UTGDiff	& 0.860 & 0.903 & 0.901 & 0.853 & 0.821 \\
\bottomrule
\end{tabular}

\begin{tabular}{c |c c c c}
\toprule
BertzCT & <100 & 100-300 & 300-800 & >800	\\
\midrule
BioT5 & 0.765	& 0.863	& 0.831	& 0.857	\\
tgl-dlm & 0.757	& 0.859	& 0.841	& 0.882	\\
UTGDiff	& \textbf{0.837} & \textbf{0.898} & \textbf{0.881} & \textbf{0.889}\\

\bottomrule

\end{tabular}
\caption{Analysis for scalability of the model}
\label{scalability} 
\end{table}

\paragraph{Scalability} We further assess UTGDiff's scalability on instruction and molecule complexities. We report MACCS FTS across instruction length and BertzCT intervals, a topological  metric reflecting graph size and branching. Table \ref{scalability} shows that UTGDiff maintains strong performance across all complexity levels, demonstrating robustness on detailed instructions and complex molecules.
\begin{table}

\centering
\begin{tabular}{c |c c c c c c}
\toprule
Model & MACCS $\uparrow$ & RDK $\uparrow$ & Morgan $\uparrow$ & FCD $\downarrow$ & Exact $\uparrow$ & Valid $\uparrow$ \\
\midrule
MolT5-base & 0.733	& 0.651	& 0.621	& 2.20	& 0.204 & 0.761 \\
BioT5-base	& 0.737	& 0.646	& 0.595	& \textbf{1.40}	& 0.242 & \textbf{1.000} \\
tgm-dlm & 0.741	& 0.667	& 0.612	& 2.06	& 0.221 & 0.765\\
3M-Diffusion & 0.495 & 0.332 & 0.242 & 3.47 & 0.015 & \textbf{1.000}\\
UTGDiff	& \textbf{0.763} & \textbf{0.675} & \textbf{0.623} & 2.03 & \textbf{0.386} & 0.847 \\
\bottomrule
\end{tabular}

\caption{Analysis for generalization in zero-shot experiment}
\label{scalability} 
\end{table}

\paragraph{Generalization} To assess the model's generalization to unseen instructions, we conduct a zero-shot evaluation on PCDes using the model trained only on ChEBI. Sentence-BERT confirms that PCDes instructions differ from ChEBI's, with 0.929 cosine similarity compared to 0.999. UTGDiff performs competitively across similarity metrics and generalizes well without retraining.

\subsection{Multi-Property Conditional Molecule Generation}
\label{sec:4.7}

The instruction dataset above mainly focuses on molecular structures. However, real-world molecular generation scenarios often require inverse design, based on specified property conditions. To better reflect this, we evaluate multi-property conditional generation, formulating numerical properties as part of instructions to assess our model’s ability across more practical conditions. We adapt UTGDiff by expressing property constraints textually and introducing separate MLP encoders to map each condition into an input embedding.

\paragraph{Dataset} We utilize the polymer dataset for materials, which includes three numerical gas permeability properties: O\textsubscript{2}Perm, CO\textsubscript{2}Perm, and N\textsubscript{2}Perm.  It contains 553 entries, with 20\% for validation and 20\% for testing following the data split protocol in~\cite{liu2024graph}.

\paragraph{Baselines} We compare against several baselines, including the VAE-based JTVAE model with Bayesian optimization (JTVAE-BO), and diffusion-based methods such as predictor-guided GDSS, DiGress~\cite{vignac2023digress}, and predictor-free Graph-DiT~\cite{liu2024graph}. We follow the baseline setup in Graph-DiT~\cite{liu2024graph}, restricting comparisons to graph-based models, as text-based models are not designed for precise numerical control on property-conditioned generation.

\paragraph{Evaluation} For fair comparison, we adopt the same evaluation protocol on 10,000 generated molecules. In addition to validity and FCD metrics described above, we have: (1) heavy atom type coverage (Coverage); (2) diversity among examples (Diversity); (3) fragment-based similarity with the reference set (Similarity); (4) MAE between the generated and conditioned molecules, including synthetic accessibility score (Synth.) and MAE for numerical conditions (Property).

\paragraph{Results} Our findings are shown in Table \ref{tab:main1} and described as follow:

\textbf{UTGDiff maintains strong molecular distribution modeling beyond prior diffusion methods.} UTGDiff achieves the best performance in both similarity and distribution-level metrics. This is notable since it contains only 11 heavy atom types and 5 condition types, much simpler than the diversity seen during pretraining. These results highlight UTGDiff's generalization from large-scale structure-focused pretraining to property-conditioned generation.

\textbf{UTGDiff advances conditional graph diffusion and achieves SOTA multi-property control.} Across all three properties, UTGDiff consistently reduces MAE, averaging a 19\% improvement over the previous SOTA Graph-DiT. This demonstrates that our layer-wise interaction and dedicated numerical encoding enable better multi-property control, validating UTGDiff’s scalability for more diverse property conditions and reflecting real-world generation scenarios.

\begin{figure}[htbp]
  \centering
  \includegraphics[width=0.85\columnwidth]{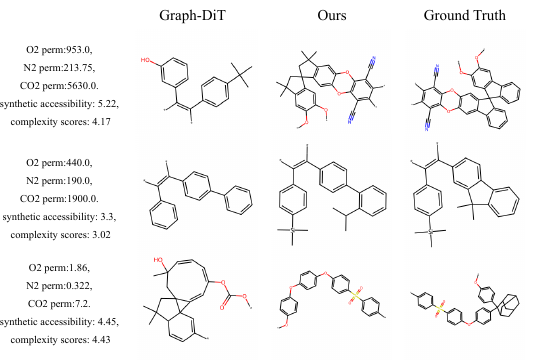}
  \caption{Some generation examples comparison on O2-N2-CO2 Perm.}
  \label{fig:example-new}
\end{figure}

\paragraph{Case Study} Several examples are shown in Figure~\ref{fig:example-new}. UTGDiff successfully generates near-identical molecules that match the ground truth and outperform baseline models (Example 1). Moreover, it accurately recovers correct atom types (S,Si) under property constraints where other models fail (Example 2 and 3), highlighting UTGDiff’s precise understanding of numerical conditions. 

\subsection{Instruction-Based Molecule Editing}
\label{sec:4.4}

We further evaluate UTGDiff on molecule editing tasks to assess  generalization in diverse scenarios. Unlike generation, editing requires an additional source graph as input, incorporated into the instruction in same graph format as output. Evaluation follows the same metrics as in instruction-based molecule generation.

\paragraph{Dataset} We select the retrosynthesis and forward reaction prediction tasks from Mol-Instructions \citep{fang2023mol}. The retrosynthesis task predicts precursors from a target compound, while the forward reaction prediction task forecasts products from given reactants and reagents. Each dataset contains 120K training pairs and 1,000 testing pairs.

\paragraph{Baseline} For the editing datasets, the baseline models include (1) specialist auto-regressive models such as BioT5 \citep{pei2023biot5} and TEXT+CHEM T5 \citep{christofidellis2023unifying}; (2) LLM such as InstructMol \citep{cao2025instructmol} and others. We also reproduce BioT5's experiments on our pretraining datasets rather than referencing their reports as discussed above.

\paragraph{Results} The results are presented in Table \ref{editing1} and Table \ref{editing2}.

\begin{table}

\centering

\begin{tabular}{c c|c c c c c}
\toprule
Type & Model & MACCS$\uparrow$ & RDK$\uparrow$ & Morgan$\uparrow$ & Exact$\uparrow$ & Valid$\uparrow$ \\
\midrule
\multirow{5}*{LLM} & GALACTICA & 0.274 & 0.167 & 0.134 & 0.000 & 0.986 \\

& Mol-Instructions & 0.487 & 0.283 & 0.230 & 0.009 & 1.000 \\

& Llama-7b* (LoRA) & 0.294 & 0.136 & 0.106 & 0.000 & 1.000 \\

& InstructMol-G & 0.523 & 0.422 & 0.285 & 0.114 & 1.000 \\

& InstructMol-GS & 0.852 & 0.753 & 0.714 & 0.407 & 1.000 \\
\midrule
\multirow{2}*{T5} & TEXT+CHEM T5 & 0.765 & 0.685 & 0.585 & 0.141 & 0.698 \\

& BioT5 (reproduce) &  \textbf{0.904} & 0.843 &  0.810 & 0.480 & \textbf{1.000} \\
\midrule
 & UTGDiff  & \textbf{0.904} & \textbf{0.847} & \textbf{0.817} & \textbf{0.541} & 0.945  \\
\bottomrule

\end{tabular}

\caption{Mol-Instruction Retrosynthesis results.}
\label{editing1}
\end{table}
\begin{table}

\centering
\begin{tabular}{c c|c c c c c}
\toprule

Type & Model & MACCS$\uparrow$ & RDK$\uparrow$ & Morgan$\uparrow$ & Exact$\uparrow$ & Valid$\uparrow$ \\
\midrule
\multirow{5}*{LLM} & GALACTICA & 0.127 & 0.036 & 0.051 & 0.000 & 0.995 \\

& Mol-Instructions & 0.509 & 0.313 & 0.262 & 0.045 & 1.000 \\

& Llama-7b* (LoRA) & 0.649 & 0.499 & 0.407 & 0.012 & 1.000 \\

& InstructMol-G & 0.717 & 0.519 & 0.457 & 0.114 & 1.000 \\

& InstructMol-GS & 0.878 & 0.776 & 0.741 & 0.407 & 1.000 \\
\midrule
\multirow{2}*{T5} & TEXT+CHEM T5 & 0.789 & 0.705 & 0.652 & 0.141 & 0.698 \\

& BioT5 (reproduce) & 0.954 & 0.907 & 0.890 & 0.684 & 1.000 \\
\midrule
 & UTGDiff  & \textbf{0.973} & \textbf{0.943} & \textbf{0.942} & \textbf{0.825} & 0.972 \\
\bottomrule

\end{tabular}

\caption{Mol-Instruction forward reaction prediction results.}
\label{editing2}
\end{table}

UTGDiff consistently outperforms all baselines across fingerprint similarity and exact match metrics, achieving 5\% improvement in average in two tasks. These results underscore the methodological advantages of our diffusion approach over autoregressive baselines.

Compared to InstructMol-G and InstructMol-GS, we achieves a twofold increase in exact match scores on forward reaction prediction, along with at least a 5\% improvement across all metrics on both tasks. This highlights the effectiveness of graph diffusion generation, as InstructMol-G integrates a 2D  graph encoder but still relies on a sequence-based decoder, limiting its structural modeling capacity.

\subsection{Ablation Study}
\label{sec:4.5}

We ablate our model on the model design and training strategy on CHEBI-20 dataset,  validating the contribution of these key designs.

\paragraph{Model Design}

We conduct ablation studies focusing on two aspects of model design: (1) the contribution of the diffusion module to generation performance; and (2) the effectiveness of key architectural components, including unification via attention bias and the absorbing state implemented through [MASK] tokens.

\begin{table}
\centering
\begin{tabular}{c c|c c c}
\toprule

model & steps & MACCS FTS$\uparrow$ & Valid $\uparrow$ & time (secs/sample) $\downarrow$  \\
\midrule
\multirow{4}*{UTGDiff}
& 1 & 0.838 & 0.515 & 0.861 \\
& 10 & 0.879 & 0.860 & 0.968  \\
& 100 & 0.885 & 0.893 & 1.929\\
& 1000 & 0.881 & 0.901  & 11.388\\
\bottomrule

\end{tabular}

\caption{Ablation on diffusion. Steps = 1 means without diffusion.}
\label{sample} 

\end{table}

(1) \textbf{Effect of the Diffusion Module.} The $x_0$-parameterization  allow us to use flexible step-size inference during generation. Performing generation with one-step logits represents a setting without diffusion. As shown in Table~\ref{sample}, the diffusion module improves performance, demonstrating the importance of progressive refinement.

\begin{table}

\centering
\begin{tabular}{c|c c c}
\toprule

Model & MACCS FTS$\uparrow$ & FCD $\downarrow$ & Valid $\uparrow$ \\
\midrule
Cross-attn & 0.693 & 2.720 & 0.632 \\

marginal absorbing & 0.877 & 1.852 & 0.837  \\

UTGDiff & \textbf{0.885} & \textbf{0.866} & \textbf{0.893} \\
\bottomrule
\end{tabular}
\caption{Ablation on our model design.} 
\label{des} 
\end{table}

(2) \textbf{Effect of Architecture.} To ablate the unification strategy, we replace attention bias with cross-attention, encoding instructions separately for cross-modal fusion. To ablate the absorbing state design, we replace the [MASK]-based noise representation with marginal distributions as Graph-DiT \citep{liu2024graph}. Table~\ref{des} confirms the effectiveness of both attention bias and [MASK]-based noise modeling.

\begin{table}

\centering
\begin{tabular}{c|c c c c c}
\toprule
Model & MACCS $\uparrow$ & FCD $\downarrow$ & Valid $\uparrow$ & Train loss $\downarrow$  &  Entry \\
\midrule

All &  \textbf{0.885} &  \textbf{0.866} &  \textbf{0.893} & \textbf{7.69e-4}  & 100M\\

\midrule

Pair-only & 0.870 & 0.892 & 0.879 & 1.99e-3 & 300K\\

from scratch  & 0.867 & 0.923 & 0.856 & 1.58e-2 & 0\\

\midrule

no textual loss  & 0.882 & 1.224 & 0.833 & 3.57e-3 & 100M \\

\midrule

no position index  & 0.764 & 2.387 & 0.821 & 2.92e-1 & 100M \\

\bottomrule

\end{tabular}

\caption{Ablation study on pretrain data.} 
\label{pretrain_table} 

\end{table}

\paragraph{Training Strategy}
We conducted ablation experiments to assess the impact of pretraining scale and the textual loss $l(\hat{p}(S), S)$ during fine-tuning. Table~\ref{pretrain_table} highlights the importance of both single-modal and paired pretraining datasets, demonstrating that scaling pretraining data significantly benefits downstream performance. The comparison of textual loss shows that incorporating textual loss leads to better convergence and higher validity. The comparison of position index shows the necessity of our sampling stratage. 

\section{Conclusions and Limitations}

We propose UTGDiff, a unified instruction-based framework that integrates graph and text within a single transformer. With absorbing noise, atom-specific tokens, and attention bias, it enables fine-grained graph diffusion generation conditioned on complex instructions. Our model excels in molecule generation and editing, covering instructions for molecular structures or properties, while maintaining a smaller parameter size compared to existing baselines.

Despite these advancements, UTGDiff is limited by the scale of its pretraining corpus, which may constrain generalization to more diverse molecular structures. Future work will focus on scaling pretraining and exploring more advanced discrete diffusion techniques to further enhance instruction-conditioned generation.





\bibliography{mybibfile}


\appendix

\newpage

\section{Analysis of FCD}
\label{sec:appendix1}

FCD is a metric originally used to measure the distance between two sets of distributions. Previous works employed this metric to compare the distance between the ground-truth test sets and generation sets in instruction-based molecule generation tasks, rather than comparing the similarity one by one as other metrics do.

Since this metric compares the distance between molecule distributions, it can be misled by molecules generated from other instructions. For example, FCD is equivariant under reordering, meaning it cannot distinguish errors where generated molecules and instructions are exactly cross-matched. For instance, the instruction $S_1$ match the molecule ‘CCO’, and instruction $S_2$ match the molecule ‘[He]’.  If we generate ‘CC’ with instruction $S_1$ and ‘[He]’ with instruction $S_2$, there's no doubt that FCD ([‘CCO’, ‘[He]’], [‘CC’, ‘[He]’]) > 0. However, If we generate ‘[He]’ with instruction $S_1$ and ‘CCO’ with instruction $S_2$, which is a totally irrelavant results, we'll got an unexpected result FCD ([‘CCO’, ‘[He]’], [‘[He]’, ‘CCO’]) = 0.

To demonstrate the effectiveness of our model in FCD, we divided the test set into 10 subsets and calculated the average and variance FCD across them. This method can partially reduce interference between different instructions. We include tgm-dlm for comparison, as it has a lower FCD, while MolXPT is not open-sourced.

Using this processing method, our model achieved better FCD results, as shown in Table \ref{table20}. The difference in magnitude between the two settings is an inherent property of FCD, which is related to the number of molecule sets.

\section{Proof of equivarient property}
\label{sec:appendix2}

The graph generation model needs to ensure that the node's attribution in the graph is equivariant to reorderings, meaning that the matrices under reordering can represent the same graph. This property requires no positional embedding in the denoising network. However, we empirically found the importance of positional embedding for better performance.

We attribute this result to the rich position information in the text description, as shown in Figure \ref{fig:example-pos}. So, the model will lose its perception of this essential information without position embedding. This issue is more significant in diffusion models since the located step must be later than the decoding of the atom.

\begin{figure}[htbp]
  \centering
  \includegraphics[scale=0.5]{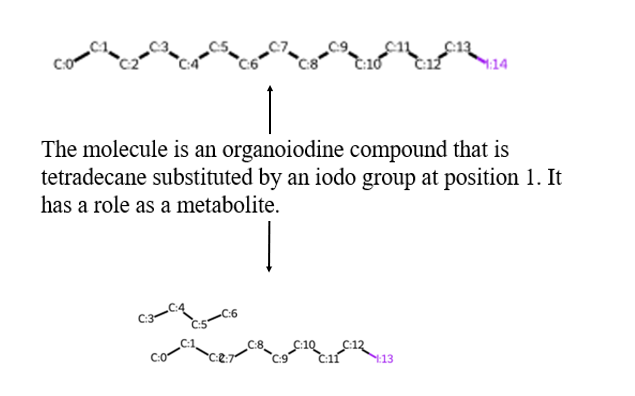}
  \caption{An example of generation results w/w.o position embedding. Adding position embedding can locate the position of the iodo group. }
  \label{fig:example-pos}
\end{figure}

Therefore, to ensure equivariance on the inference step while retaining the utilization of position embedding, We adopt the following strategy: We use position embedding as before during training and make a random permutation of the position index in sequence during inference. It can be proven that the model is still equivariant during generation, although it is trained to fit a specific order of decoding atoms.

The theorem to be proven in this section can be written as follows:

\begin{theorem}{ (Equivariancy for Graph Generation)}
\label{theorem3.1}
For any permutation $\pi$, the model will generate a graph with node feature $V$ and adjacency matrix $A$ with input position index $i$, satisfying $P_i (V, A) = P _{\pi^T i} (\pi^T V, \pi^TA\pi)$.
\end{theorem}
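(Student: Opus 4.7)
The plan is to reduce the claim to an equivariance property of the one-step denoising network $\phi_\theta$ and then propagate that property through the $T$-step reverse process by induction. Concretely, I would first define, for a fixed position-index vector $i$, the conditional generation distribution $P_i(V,A) = p_\theta(G^0 = (V,A) \mid G^T, S; i)$ obtained from the sampling algorithm of Algorithm~\ref{alg:sample} when the position embeddings are indexed by $i$. The target identity says that permuting the node ordering by $\pi$ (which sends $V \mapsto \pi^T V$ and $A \mapsto \pi^T A \pi$) and simultaneously permuting the position indices by $\pi^T i$ leaves this conditional distribution invariant. So the statement is really a pointwise equivariance of the full generative pipeline.

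The central lemma I would prove first is the single-step equivariance of $\phi_\theta$: for any permutation $\pi$ acting only on graph tokens (leaving the $n$ text tokens fixed),
\begin{equation*}
\phi_\theta\bigl(\pi^T V^t, \pi^T E^t \pi, S;\, \pi^T i\bigr) \;=\; \bigl(\hat p(S),\, \pi^T \hat p(V),\, \pi^T \hat p(E)\pi\bigr).
\end{equation*}
To see this, I would walk through the transformer layers. The token embedding map is elementwise, so it commutes with $\pi$. The position embedding, being added coordinatewise to $h^0$, also commutes with the joint permutation once the position indices are relabeled as $\pi^T i$. For the attention layer, the query/key/value projections are applied per token, hence commute with $\pi$; the attention bias $b^0_{i,j}$ carries along with the edges, and the bias-passing rule $b^l = A^{l-1}$ preserves the symmetry inductively. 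Thus the attention matrix transforms as $A^l \mapsto \pi^T A^l \pi$ and the output $A^l H^l W_V W_O$ transforms as $\pi^T(\cdot)$ on graph rows. The final MLM heads act rowwise for nodes and elementwise on pairs for edges, completing the single-step claim.

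Given the single-step lemma, I would argue by downward induction on $t$. The sampling rule in Algorithm~\ref{alg:sample} factorizes as a product of per-node and per-edge categorical draws whose parameters are obtained by marginalizing the elementwise forward noise against $\hat p_i(v)$ and $\hat p_{i,j}(e)$. Because both the noise transitions $q$ and the parameterization are intrinsically elementwise in node/edge indices, they commute with $\pi$. Combining with the single-step equivariance of $\phi_\theta$ yields $p_\theta(G^{t-1}\mid G^t, S; i)$ having the same equivariance property; initializing at the fully masked $G^T$, which is itself fixed by any $\pi$, and composing across $t = T,\dots,1$ gives the desired identity for $P_i$.

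The main obstacle, as I see it, is the bookkeeping around the position embedding: the attention bias recursion $b^l_{i,j} = A^{l-1}_{i,j}$ mixes positional information into edge channels across layers, so I have to verify carefully that the permutation acts consistently on $b^l$ at every depth rather than just at $l=0$. A secondary, lighter obstacle is making the ``text tokens stay fixed'' aspect of $\pi$ explicit so that self-attention between text and graph tokens also transforms correctly; once that is pinned down, the subsequent inductive step and the final claim follow routinely. A brief corollary worth stating is that if the inference procedure draws $i$ uniformly from the symmetric group orbit, then the marginal $\sum_i P_i(V,A)/|S_m|$ is fully permutation-invariant, which is the practical guarantee the paragraph surrounding the theorem is after.
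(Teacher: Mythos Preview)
Your proposal is correct and follows essentially the same route as the paper: establish single-step equivariance of the denoising network $\phi_\theta$ (the paper's Lemma~2, with the same handling of the position embedding via $\pi^T X + \pi^T\mathrm{emb}(i) = \pi^T(X+\mathrm{emb}(i))$), then propagate through the reverse chain using that the fully masked $G^T$ is permutation-invariant and the per-node/per-edge transition kernels are elementwise. The only presentational differences are that you carry out the propagation by explicit downward induction on $t$, whereas the paper invokes the Xu et al.\ (GeoDiff) lemma that invariance of $p(G^T)$ plus equivariance of $p(G^{t-1}\mid G^t)$ yields invariance of $p(G^0)$; and you omit the paper's Lemma~1 on permutation invariance of the training loss, which the paper states but does not actually use in its proof of the theorem.
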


The proof aims to show that: If the model's network architecture is equivariant with input mapping permutation and the training loss is permutation invariant, the model will generate a distribution reordering with position index.

\begin{lemma}
The loss in training is invariant under reordering: For given predict graph $\hat G$ and ground truth graph $G$, given a permutation$\pi$, we have$l (\pi. \hat G, \pi. G) = l (\hat G, G)$
\end{lemma}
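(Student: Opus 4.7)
The plan is to unpack the definition of the training loss given in equations (10)--(11) and show that each constituent sum is taken over \emph{all} indices in $\{1,\dots,m\}$ (for nodes) or in $\{1,\dots,m\}^2$ (for edges), so simultaneously relabeling the node indices of $\hat{G}$ and $G$ by the same permutation $\pi$ just permutes the summands and leaves the total unchanged. The text term $l(\hat{p}(S),S)$ plays no role, since $\pi$ acts only on node indices.

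First I would fix the action of $\pi$ on a graph: if $\pi$ is an $m\times m$ permutation matrix, then $\pi\cdot G$ sends $V\mapsto \pi V$ and $E\mapsto \pi E\pi^\top$, so in index form $(\pi V)_i = V_{\pi^{-1}(i)}$ and $(\pi E\pi^\top)_{i,j} = E_{\pi^{-1}(i),\pi^{-1}(j)}$, with the same action applied to the predicted distributions $\hat{p}(V)$ and $\hat{p}(E)$. Substituting into the node term gives
$$\sum_{i=1}^{m}\mathrm{CE}\bigl((\pi V)_i,(\pi\hat{p}(V))_i\bigr)=\sum_{i=1}^{m}\mathrm{CE}\bigl(V_{\pi^{-1}(i)},\hat{p}_{\pi^{-1}(i)}(V)\bigr),$$
and the change of dummy variable $k=\pi^{-1}(i)$ (a bijection on $\{1,\dots,m\}$) recovers the original sum $\sum_{k}\mathrm{CE}(V_k,\hat{p}_k(V))$. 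The edge term is treated the same way with the double change of variables $(k,\ell)=(\pi^{-1}(i),\pi^{-1}(j))$, which is a bijection on $\{1,\dots,m\}^2$ and so preserves the double sum $\sum_{k,\ell}\mathrm{CE}(E_{k,\ell},\hat{p}_{k,\ell}(E))$. Adding the unchanged text term yields $l(\pi\cdot\hat{G},\pi\cdot G)=l(\hat{G},G)$.

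There is no substantive obstacle here: the lemma reduces to the elementary fact that a finite sum is invariant under relabeling of its summation index, combined with the fact that cross-entropy is applied pointwise (per node and per edge pair) and that $\pi$ is applied to prediction and ground truth \emph{jointly}, so the pairs $(V_k,\hat{p}_k(V))$ and $(E_{k,\ell},\hat{p}_{k,\ell}(E))$ are preserved. The only care needed is to pin down a consistent convention for whether $\pi$ acts on row indices as $\pi(i)$ or $\pi^{-1}(i)$, so that the change of variable cleanly cancels; once that is fixed the manipulations are one line each, and the lemma is then ready to be combined with equivariance of the denoising network to obtain the full Theorem~\ref{theorem3.1}.
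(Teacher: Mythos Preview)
Your proposal is correct and follows essentially the same argument as the paper: both observe that the loss is a sum of per-node and per-edge cross-entropy terms, so jointly permuting $\hat G$ and $G$ merely reindexes the summands and leaves the total unchanged. Your treatment is more explicit about the change of variables and the (trivial) role of the text term, but the underlying idea is identical.
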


\begin{proof}
Since the defined loss function is the same for each node and edge, we have 
\begin{align*}
l(\pi.{\hat{G}},\pi.G) = \sum_{i}l_{X}(\pi.{\hat{X}}_{i},x_{\pi^{-1}(i)}) \\ + \sum_{i,j}l_{E}(\pi.{\hat{\mathbb{E}}}_{i,j},e_{\pi^{-1}(i),\pi^{-1}(j)}) \\ = \sum_{i}l_{X}({\hat{X}}_{i},x_{i})+\sum_{i,j}l_{E}({\hat{\mathbb{E}}}_{i,j},e_{i,j}) \\ = l(\hat G, G)
\end{align*}
\end{proof}

\begin{lemma}
The architecture of the model is equivariant with the input embedding: For given predict graph $\hat G$ and ground truth graph $G$, given a permutation$\pi$, we have $\phi_{\theta}(\pi.G^{t})=\pi.\phi_{\theta}(G^{t})$
\end{lemma}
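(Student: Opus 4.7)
The plan is to prove the equivariance of $\phi_\theta$ on the graph part by structural induction over the stacked attention layers, tracking simultaneously how the node permutation $\pi$ acts on token embeddings and on the attention bias matrix. Write $\pi.G^t$ to mean that the node matrix $V^t$ is replaced by $\pi^\top V^t$ and the edge tensor $E^t$ by $\pi^\top E^t \pi$. At the input embedding stage, because Emb is applied entrywise, the token at position $n+i$ becomes $h^0_{\pi^{-1}(n+i)}$, and by definition of the initial edge embedding $b^0$ on categorical edge attributes, the bias matrix transforms as $b^0_{i,j}\mapsto b^0_{\pi^{-1}(i),\pi^{-1}(j)}$. Text embeddings $h_1,\dots,h_n$ are untouched, since $\pi$ acts trivially on text indices.

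For the inductive step, I would assume that after $l$ layers the hidden states and the bias satisfy $h^l_{n+i}\mapsto h^l_{\pi^{-1}(n+i)}$ and $b^l_{i,j}\mapsto b^l_{\pi^{-1}(i),\pi^{-1}(j)}$ on the graph-graph block, and check each factor of equation (6) in turn. The bilinear form $(h_i W_Q)(h_j W_K)^\top$ depends only on the embeddings at positions $i$ and $j$, so substituting the permuted inputs yields the original value at indices $(\pi^{-1}(i),\pi^{-1}(j))$; adding the equivariantly transformed bias preserves this property. The row-wise softmax has its argument row $i$ replaced by row $\pi^{-1}(i)$ with columns reordered by $\pi$, so its output inherits the same permutation. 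Finally, the linear map $A^l H^l W_V W_O$ acts on the token dimension through $A^l$, so by standard matrix identities the row and column permutations carry through, giving $h^{l+1}$ equivariantly permuted; since $b^{l+1}=A^l$, the bias is equivariant as well, closing the induction. The text-to-graph block is checked analogously: a text-token output is a sum $\sum_j A_{i,j}\,h_j W_V$ over graph indices $j$, which is a reindexing of the sum and therefore invariant, as required because $\pi$ fixes text positions. Applying the two MLM heads tokenwise yields $\phi_\theta(\pi.G^t)=\pi.\phi_\theta(G^t)$.

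The main obstacle, which the authors themselves flag in Appendix B, is the positional embedding added to the input tokens. A vanilla positional embedding depends on the slot $n+i$ rather than on the content placed there, so it breaks the base case of the induction. I would state the lemma explicitly under one of two conventions: either no positional embedding is attached to graph tokens, or the position indices assigned to graph tokens are sampled uniformly at inference time, in which case the equality $\phi_\theta(\pi.G^t)=\pi.\phi_\theta(G^t)$ holds in distribution. Combined with Lemma 1, this gives Theorem \ref{theorem3.1}: permutation-invariance of the loss ensures that the learned distribution does not privilege any particular node ordering, while the equivariant architecture propagates that symmetry to the sampled graphs, so that $P_i(V,A)=P_{\pi^\top i}(\pi^\top V,\pi^\top A\pi)$.
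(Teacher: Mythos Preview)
Your argument is correct and follows the same high-level strategy as the paper: show that $\phi_\theta$ is a composition of permutation-equivariant blocks. The paper's own proof of this lemma is a three-line sketch that simply asserts self-attention, linear layers, and layer normalization are equivariant and concludes. You go further by explicitly tracking how the attention bias $b^l_{i,j}$ transforms under $\pi$ through the recursion $b^{l+1}=A^l$, which is the nonstandard part of this architecture and the place where a careful reader would want detail; the paper glosses over it. Your separate handling of the text--graph cross terms and your explicit discussion of the positional-embedding obstruction (which the paper defers to the proof of Theorem~\ref{theorem3.1}) are likewise sound and more thorough than what the paper provides.
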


\begin{proof}
Define $G_t = (X_t, E_t)$ as the noisy graph, $(\pi.X_t, \pi.E_t)$ as the permutation. Since the input is permutation equivariant, and we have:

\begin{itemize}
    \item The self attention architecture is permutation invariant
    \item The linear layers are permutation invariant.
    \item The Layer normalization is permutation equivariant.
\end{itemize}

Therefore, the model is a combination of permutation equivariant blocks.
\end{proof}

\begin{proof}
proof of theorem \ref{theorem3.1}

For the input embedding after model permutation, we have $\pi^T X + \pi^T emb(i) = \pi^T (X+emb(i)) $, Therefore, the assumption of input mapping permutation equivariant holds.

Following the proof in DiGress, with the lemma: if a distribution $p(G_T)$ is invariant to the action of a group G and the transition probabilities $p(G_{t-1}|G_t)$ are equivariant, them $p(G_0)$is invariant to the action of G. We apply this result to the special case of permutations:

\begin{itemize}
    \item The initial noise distribution is the mask distribution on each node and edge. It is therefore permutation invariant.
    \item The denoising neural networks is permutation equivariant.
    \item The transition probabilities function $p_\theta(G_{t-1}|G_t) = \sum_G q(G_{t-1}, G|G_t)\hat p_\theta(G)$ is equivariant to $\bar p_\theta(G)$ and $G_T$.
\end{itemize}

The conditions are therefore satisfied, and the model satisfies $P_i(V, A) = P _{\pi^T i}(\pi^T V, \pi^TA\pi)$.

\end{proof}

\begin{table}
\fontsize{8}{12}\selectfont
\begin{center}
\begin{tabular}{c|c c}
\toprule
 Model & 10 subset (new) & 1 set (origin) \\
\midrule

Tgm-dlm	& $3.38 \pm 0.078$ & 0.77 \\
UTGDiff	& $3.25 \pm 0.055$ &	0.866 \\

\bottomrule

\end{tabular}
\caption{FCD comparison}
\label{table20}
\end{center}
\end{table}

\section{Details for evaluation metrics}
\label{sec:appendix3}

Since the model generates a matrix of nodes and edges, although ultimately a SMILES will be parsed based on the matrix, it is not directly generated, which can easily cause the problem of different strings for same graph. Therefore, although molecules can be represented by biological sequence structures, and previous models have also established evaluation indicators from the level of string similarity, this article does not involve these indicators in comparison, including NLP indicators such as BLEU, Levenshtein.

Therefore, we use a series of metrics related to the similarity of molecular graphs:

\begin{itemize}
    \item Exact: Whether the two molecules are same.
    \item Valid: Whether the generated molecule satisfied the constraint for molecule, such as the valence rule.
    \item FTS: We employ three fingerprint metrics: MACCS FTS, RDK FTS, and Morgan FTS, where FTS stands for fingerprint Tanimoto similarity. MACCS \citep{durant2002reoptimization}, RDK\citep{schneider2015get} and Morgan\citep{rogers2010extended}. The fingerprints of two molecules are compared using Tanimoto similarity (also known as Jaccard index), and the average similarity over the evaluation dataset is reported. We use RDKIT toolkit.
    \item FCD score (Fréchet chemnet distance): Measure molecular similarity based on a pre-trained "ChemNet" bioinformatics network. We use fcd 1.1 in python.
\end{itemize}

For better evaluation, we also list some metrics which is not discussed in previous baseline. Here're the information and results:

\begin{itemize}
    \item Maximum Common Edge Substructure (MCES) : This metric finds the largest subgraph (in terms of the number of edges) that is common to two or more given graphs. We used the rdRascalMCES function in RDKit (Python) to measure MCES, with the default threshold set to 0.7. If either of the Johnson similarity estimates falls below this threshold, we consider as invalid for MCES.
\end{itemize}

The results of MCES in CHEBI-20 is shown in Table \ref{table30}.

\begin{table}
\fontsize{8}{12}\selectfont
\begin{center}
\begin{tabular}{c|c }
\toprule
Method	&MCES\\
\midrule
BioT5 (reproduce)	&0.8891\\
gpt4-2024-04-09 (10 shot+BM25)	&0.9060\\
TGM-DLM &0.9036\\
UTGDiff (our) &\textbf{0.9166}\\

\bottomrule

\end{tabular}
\caption{MCES results in different models}
\label{table30}
\end{center}
\end{table}

\section{Details for experiments}
\label{sec:appendix4}

Here're the hyperparam for three different tasks describe in article, and here's some general information on hyperparam: 

(1) The pretraining process spans nearly 300K steps and is executed on four NVIDIA 24GB  GeForce RTX 3090 GPUs with batch size 32 per GPU, totally trained for near 1 week. The finetuning process spans nearly 800K steps and is executed on two NVIDIA 24GB  GeForce RTX 3090 GPUs with batch size 16 per GPU, totally trained for near 3 days.

(2) During finetuning, we gradually increase the accumulation step for the tradeoff of training efficiency and convergence level. The effective of this method is shown in the ablation of accumulation step shown in Table \ref{table8}. So, the initial accumulation step is 1, and it will be increase to 64 finally.

(3) When the model is used for generation, we additionally introduces top-k sampling to help improve the quality of generation: when a class is taken from the calculated probability for discrete generation, only the larger node class is taken. 

(4) All the training and generation program is running under specific seed. We only generate onces, but experiments shows there's no significant variance during sampling for different seeds. We show the 3 seed example in forward reaction prediction in Table \ref{table12}

The specific data is listed in Table \ref{table9}, \ref{table10}, \ref{table11}

\paragraph{prompt} Here we list the prompt for the three datasets: For CHEBI-20 dataset, We give its instruction form as : "'[molecule description]' is the description of molecule:" For the two editing dataset, since there's task instruction in datasets, we don't use any additional instructions. Here's an example in Retrosynthesis dataset: "Please suggest potential reactants for the given product."; Here's an example in forward reaction prediction dataset: "With the provided reactants and reagents, propose a potential product." 

Also, here we list the prompt for large language model baselines. To limit the Generalist LLM to generate the desired SMILES, we use below prompts for both chatgpt and gpt-4 baseline:

'role': user, 'content':  'What's the SMILES of the molecular with these properties? \textbackslash n' + (task instructions)+ 'The format of your answer must be "SMILES:", with no extra words.'

\begin{table}
\fontsize{8}{12}\selectfont
\begin{center}
\begin{tabular}{c|c c c c c c}
\toprule
Model & MACCS $\uparrow$ & RDK $\uparrow$ & Morgan $\uparrow$ & FCD $\downarrow$ & Exact $\uparrow$ & Valid $\uparrow$ \\
\midrule
baseline & 0.867 & 0.763 & 0.695 & 0.923 & 0.227 & 0.856 \\

acc\_step = 4 & 0.838 & 0.719 & 0.629 & 1.651 & 0.149 & 0.792 \\
\bottomrule

\end{tabular}
\caption{ablation of accumulation step (from-scratch)}
\label{table8}
\end{center}
\end{table}

\begin{table}
\fontsize{8}{12}\selectfont
\begin{center}
\begin{tabular}{c |c }
\toprule
param name & value \\
\midrule

learning rate & 5e-5 \\ 
batch size & 16 \\
accumulation step & (1,4,16,64) \\
accumulation update epoch & (1,4,16,64) \\
top k & 15 \\
predict molecule length & 128 \\
seed & 42 \\
step size & 20 \\

\bottomrule

\end{tabular}
\caption{Hyperparam for CHEBI-20 datasets} 
\label{table9} 
\end{center}
\end{table}

\begin{table}
\fontsize{8}{12}\selectfont
\begin{center}

\begin{tabular}{c |c }
\toprule
param name & value \\
\midrule

learning rate & 5e-5 \\ 
batch size & 16 \\
accumulation step & (1,4,16,64) \\
epoch & 1000 \\
accumulation update epoch & (90,150,180) \\
top k & 15 \\
predict molecule length & 108 \\
seed & 42 \\
step size & 10 \\

\bottomrule

\end{tabular}
\caption{Hyperparam for Retrosynthesis datasets} 
\label{table10} 
\end{center}
\end{table}

\begin{table}
\fontsize{8}{12}\selectfont
\begin{center}
\begin{tabular}{c |c }
\toprule
param name & value \\
\midrule

learning rate & 5e-5 \\ 
batch size & 16 \\
accumulation step & (1,4,16,64) \\
epoch & 210 \\
accumulation update epoch & (90,150,180) \\
top k & 15 \\
predict molecule length & 96 \\
seed & 42 \\
step size & 10 \\

\bottomrule

\end{tabular}
\caption{Hyperparam for forward reaction prediction datasets} 
\label{table11} 
\end{center}
\end{table}

\begin{table}
\fontsize{8}{12}\selectfont
\begin{center}
\begin{tabular}{c c|c c c c c}
\toprule
Type & Model & MACCS & RDK & Morgan & Exact & Valid \\
\midrule

& seed 0 & 0.971 & 0.938 & 0.938 & 0.821 & 0.973 \\

& seed 1 & 0.974 & 0.940 & 0.938 & 0.825 & 0.968 \\

& seed 42 & 0.973 & 0.943 & 0.942 & 0.825 & 0.972 \\
\bottomrule

\end{tabular}
\caption{3 seeds for forward reaction prediction results}
\label{table12}
\end{center}
\end{table}

\begin{figure}[htbp]
  \centering
  \includegraphics[scale=0.2]{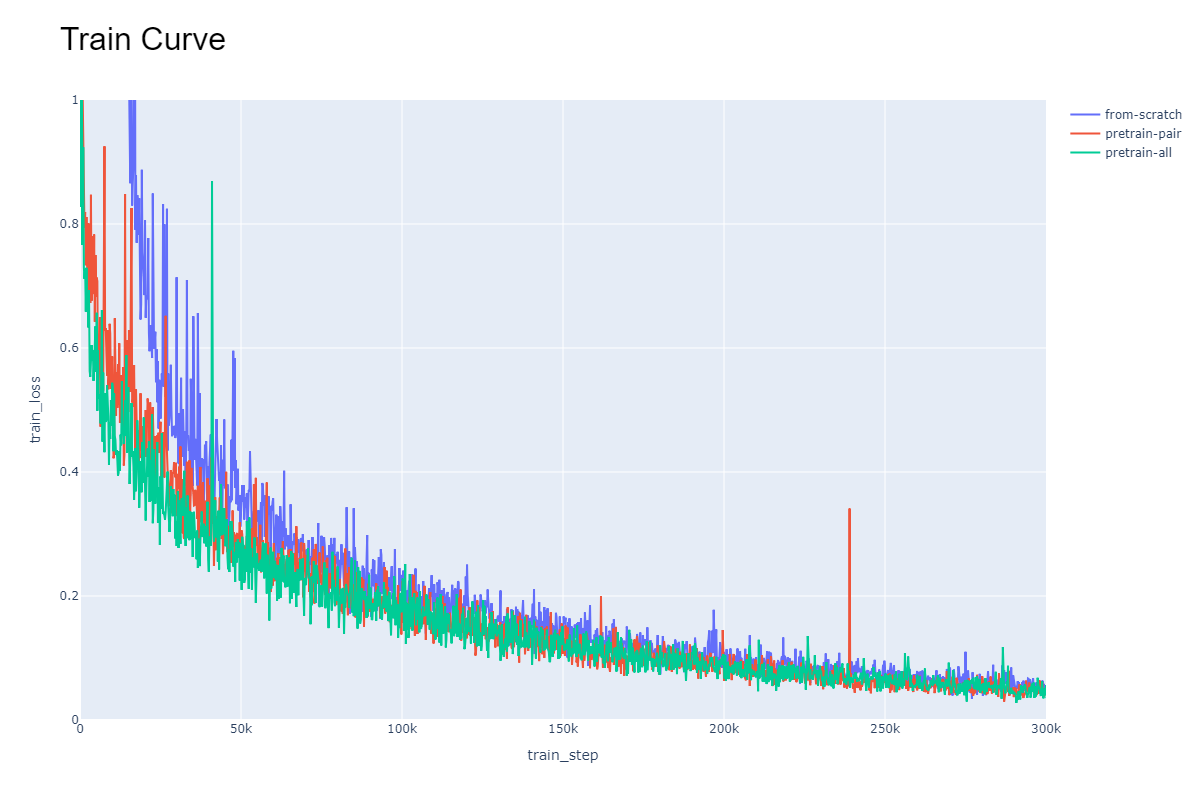}
  \caption{The training curve in first 300K steps. Pretraining also demonstrates lower initial loss and faster convergence compared to training models from scratch.}
  \label{fig:example-curve}
\end{figure}

\section{Algorithm Pseudo Code}
\label{sec:appendix5}

The pseudo code of the training and sampling algorithm is shown in \ref{alg:train} and \ref{alg:sample}.

\begin{algorithm}[!ht]
    \caption{Training Algorithm}
    \label{alg:train}
    \renewcommand{\algorithmicrequire}{\textbf{Input:}}
    \renewcommand{\algorithmicensure}{\textbf{Output:}}
    
    \begin{algorithmic}[1]
    
        \REQUIRE A graph $G = (V, E)$ with description $S$  
        
        \REPEAT
        
            \STATE Sample $t \sim U(1, \cdot, T)$
    
            \STATE Sample $G^t \sim V\bar Q_{V}^{t} \times E\bar Q_{E}^{t}$
    
            \STATE $\hat{p}(S), \hat{p}(V), \hat{p}(E) = \phi_\theta(G^{t},S)$
    
            \STATE loss = $l_{CE}(\hat{p}(E),E) + l_{CE}(\hat{p}(V),V) + l_{CE}(\hat{p}(S),S)$
            
            \STATE optimizer.step(loss)

        \UNTIL converged
       
    \end{algorithmic}
\end{algorithm}

\begin{algorithm}[!ht]
    \caption{Sampling Algorithm}
    \label{alg:sample}
    \renewcommand{\algorithmicrequire}{\textbf{Input:}}
    \renewcommand{\algorithmicensure}{\textbf{Output:}}
    
    \begin{algorithmic}[1]

        \STATE $G^T$ with all masked nodes and edges
        
        \FOR{$t = T$ to $1$}
        
        \STATE $\hat{p}(V), \hat{p}(E) = \phi_\theta(G^{t},S)$

        \STATE $p_{\theta}(v^{t-1}_i\mid G^{t}, S)=\sum_{v}q(v^{t-1}_i \mid v_{i}=v,v^{t}_{i})\;{\hat{p}}_{i}(v)$

        \STATE $p_{\theta}(e^{t-1}_{ij}\mid G^{t}, S)=\sum_{e}q(e^{t-1}_{ij}\mid e_{ij}=e,e^{t}_{ij})\;{\hat{p}}_{ij}(e)$

        \STATE $G^{t-1} \sim \prod_i p_{\theta}(v^{t-1}_i\mid G^{t}, S) \prod_{ij} p_{\theta}(e^{t-1}_{ij}\mid G^{t}, S)$

        \ENDFOR

        \RETURN $G^0$
       
    \end{algorithmic}
\end{algorithm}

\section{Potential Risks}
\label{sec:appendix6}

Potential risks associated with molecular generation models include the inadvertent creation of toxic or unstable compounds, which could pose significant safety hazards. Additionally, biases in the training data could be learned by the model, leading to unintended consequences. Another risk is the possibility of the model being used to discover harmful molecules instead of beneficial ones. These concerns highlight the necessity for stringent regulatory frameworks and ethical guidelines to govern the use of these models.

\section{Source of codes and datasets}
\label{sec:appendix7}

Our Code is avaliable at: https://github.com/ran1812/UTGDiff

All the datasets we used are open-sorcued, can be founded in github or huggingface:

huggingface: ZINC (zpn/zinc20); Pubmed; Pubchem

github: Mol-instruction, CHEBI-20

\end{document}